\DeclareMathOperator{\E}{\mathbb{E}}
\DeclareMathOperator{\loss}{\mathcal{L}}
\DeclareMathOperator{\rvx}{{\bf x}}
\DeclareMathOperator{\rvy}{{\bf y}}
\DeclareMathOperator{\rvzx}{{\bf z_x}}
\DeclareMathOperator{\rvzy}{{\bf z_y}}
\DeclareMathOperator{\rvc}{{\bf c}}
\DeclareMathOperator{\kl}{\text{D}_{\text{KL}}}
\DeclareRobustCommand{\hlpink}[1]{{\sethlcolor{pink}\hl{#1}}}
\DeclareRobustCommand{\hlyellow}[1]{{\sethlcolor{yellow}\hl{#1}}}
\DeclareRobustCommand{\hlcyan}[1]{{\sethlcolor{cyan}\hl{#1}}}
\DeclareRobustCommand{\hlgreen}[1]{{\sethlcolor{green}\hl{#1}}}
\newcommand{\bx}{\mathbf{x}}
\newcommand{\by}{\mathbf{y}}
\newcommand{\bz}{\mathbf{z}}
\newcommand{\bc}{\mathbf{c}}
\newtheorem*{Thm*}{Theorem}
\title{Generating Diverse and Consistent QA pairs from Contexts with Information-Maximizing Hierarchical Conditional VAEs}
\author{Dong Bok Lee$^1$$^*$\thanks{* Equal contribution} \: Seanie Lee$^{1,3}$$^*$ \: Woo Tae Jeong$^3$ \: Donghwan Kim$^3$ \: Sung Ju Hwang$^1$$^,$$^2$ \\  
	KAIST$^1$,  AITRICS$^2$, 42Maru Inc.$^3$,  South Korea\\
	\texttt{\{markhi,lsnfamily02,sjhwang82\}@kaist.ac.kr}\\
	\texttt{\{wtjeong,scissors\}@42maru.com}}
\renewcommand\footnotemark{}
\date{}
\begin{document}
\pgfplotsset{width=0.95\linewidth}
\maketitle
\begin{abstract}
One of the most crucial challenges in question answering (QA) is the scarcity of labeled data, since it is costly to obtain question-answer (QA) pairs for a target text domain with human annotation. An alternative approach to tackle the problem is to use automatically generated QA pairs from either the problem context or from large amount of unstructured texts (e.g. Wikipedia). In this work, we propose a hierarchical conditional variational autoencoder (HCVAE) for generating QA pairs given unstructured texts as contexts, while maximizing the mutual information between generated QA pairs to ensure their consistency. We validate our {\bf Info}rmation Maximizing {\bf H}ierarchical {\bf C}onditional {\bf V}ariational {\bf A}uto{\bf E}ncoder ({\bf Info-HCVAE}) on several benchmark datasets by evaluating the performance of the QA model (BERT-base) using only the generated QA pairs (QA-based evaluation) or by using both the generated and human-labeled pairs (semi-supervised learning) for training, against state-of-the-art baseline models. The results show that our model obtains impressive performance gains over all baselines on both tasks, using only a fraction of data for training.~\footnote{The generated QA pairs and the code can be found at \url{https://github.com/seanie12/Info-HCVAE}} 
\end{abstract}
\section{Introduction} 

\emph{Extractive Question Answering (QA)} is one of the most fundamental and important tasks for natural language understanding. Thanks to the increased complexity of deep neural networks and use of knowledge transfer from the language models pre-trained on large-scale corpora~\citep{elmo, bert, unilm}, the state-of-the-art QA models have achieved human-level performance on several benchmark datasets \citep{squad1, squad2}. However, what is also crucial to the success of the recent data-driven models, is the availability of large-scale QA datasets. To deploy the state-of-the-art QA models to real-world applications, we need to construct high-quality datasets with large volumes of QA pairs to train them; however, this will be costly, requiring a massive amount of human efforts and time. 

\begin{table}
	\small
	\centering
	\begin{tabular}{lc}
		\midrule[0.5pt]
		\textbf{Paragraph (Input) } \hlpink{Philadelphia} has more murals than \\
		any other u.s. city, thanks in part to \hlyellow{the} \hlcyan{1984} \hlyellow{creation} \\
		\hlyellow{of the department of recreation's mural} \hlyellow{arts program},\\
		\ldots The program has funded \hlgreen{more than} \hlgreen{2,800 murals}  \\
		\midrule[0.5pt]
		\textbf{Q1}\: which city has more murals than any other city? \\
		\textbf{A1}\: \hlpink{philadelphia} \\
		\midrule[0.5pt]
		\textbf{Q2}\: why philadelphia has more murals? \\
		\textbf{A2}\: \hlyellow{the 1984 creation of the department of recreation's} \\
		\:\:\:\:\:\:\:\hspace{0.04cm}\hlyellow{mural arts program} \\
		\midrule[0.5pt]
		\textbf{Q3}\: when did the department of recreation' s mural \\
		\:\:\:\:\:\:\: arts program start ? \\
		\textbf{A3}\: \hlcyan{1984} \\
		\midrule[0.5pt]
		\textbf{Q4}\: how many murals funded the graffiti arts program \\
		\:\:\:\:\:\: by the department of recreation? \\
		\textbf{A4}\: \hlgreen{more than 2,800} \\
		\midrule[0.5pt]
	\end{tabular}
	\captionsetup{font=small}
	\caption{An example of QA pairs generated with our framework. The paragraph is an extract from Wikipedia provided by~\citet{harvesting}. For more examples, please see Appendix \ref{qualitativeexamples}.}\label{example}
\end{table}

\emph{Question generation (QG)}, or \emph{Question-Answer pair generation (QAG)}, is a popular approach to overcome this data scarcity challenge. Some of the recent works resort to semi-supervised learning, by leveraging large amount of unlabeled text (e.g. Wikipedia) to generate synthetic QA pairs with the help of QG systems \cite{qaqgdualtask, gdan, collaborative, self-training}. However, existing QG systems have overlooked an important point that generating QA pairs from a context consisting of unstructured texts, is essentially a \textit{one-to-many} problem. Sequence-to-sequence models are known to generate generic sequences~\citep{discourse} without much variety, as they are trained with maximum likelihood estimation.
This is highly suboptimal for QAG since the contexts given to the model often contain richer information that could be exploited to generate multiple QA pairs.  

To tackle the above issue, we propose a novel probabilistic deep generative model for QA pair generation. Specifically, our model is a hierarchical conditional variational autoencoder (HCVAE) with two separate latent spaces for question and answer conditioned on the context, where the answer latent space is additionally conditioned on the question latent space. During generation, this hierarchical conditional VAE first generates an answer given a context, and then generates a question given both the answer and the context, by sampling from both latent spaces. This probabilistic approach allows the model to generate diverse QA pairs focusing on different parts of a context at each time. 

Another crucial challenge of the QG task is to ensure the \emph{consistency} between a question and its corresponding answer, since they should be semantically dependent on each other such that the question is answerable from the given answer and the context. In this paper, we tackle this consistency issue by maximizing the mutual information \cite{mine, deepinfomax, qainfomax} between the generated QA pairs. We empirically validate that the proposed mutual information maximization significantly improves the QA-pair consistency. Combining both the hierarchical CVAE and the InfoMax regularizer together, we propose a novel probabilistic generative QAG model which we refer to as \textbf{Info}rmation Maximizing \textbf{H}ierarchical \textbf{C}onditional \textbf{V}ariational \textbf{A}uto\textbf{E}ncoder (\textbf{Info-HCVAE}). Our Info-HCVAE generates diverse and consistent QA pairs even from a very short context (see Table~\ref{example}). 


But how should we quantitatively measure the quality of the generated QA pairs? Popular evaluation metrics (e.g. BLEU \cite{bleu}, ROUGE \cite{rouge}, METEOR \cite{meteor}) for text generation only tell how similar the generated QA pairs are to Ground-Truth (GT) QA pairs, and are not directly correlated with their actual quality~\cite{qbleu, semantic}. Therefore, we use the {\bf QA}-based {\bf E}valuation ({\bf QAE}) metric proposed by~\citet{semantic}, which measures how well the generated QA pairs match the distribution of GT QA pairs. Yet, in a semi-supervised learning setting where we already have GT labels, we need novel QA pairs that are different from GT QA pairs for the additional QA pairs to be truly effective. Thus, we propose a novel metric, {\bf R}everse {\bf QAE} ({\bf R-QAE}), which is low if the generated QA pairs are novel and diverse. 

We experimentally validate our QAG model on SQuAD v1.1 \cite{squad1}, Natural Questions \cite{naturalquestion}, and TriviaQA \cite{triviaqa} datasets, with both QAE and R-QAE using BERT-base \cite{bert} as the QA model. Our QAG model obtains high QAE and low R-QAE, largely outperforming state-of-the-art baselines using a significantly smaller number of contexts. Further experimental results for semi-supervised QA on the three datasets using the SQuAD as the labeled dataset show that our model achieves significant improvements over the state-of-the-art baseline (+2.12 on SQuAD, +5.67 on NQ, and +1.18 on Trivia QA in EM).

Our contribution is threefold:
\begin{itemize}[itemsep=0mm, parsep=0pt, leftmargin=*]
\item We propose a novel hierarchical variational framework for generating diverse QA pairs from a single context, which is, to our knowledge, the first probabilistic generative model for question-answer pair generation (QAG).  
	
\item We propose an InfoMax regularizer which effectively enforces the consistency between the generated QA pairs, by maximizing their mutual information. This is a novel approach in resolving consistency between QA pairs for QAG.  
	
\item We evaluate our framework on several benchmark datasets by either training a new model entirely using generated QA pairs (QA-based evaluation), or use both ground-truth and generated QA pairs (semi-supervised QA). Our model achieves impressive performances on both tasks, largely outperforming existing QAG baselines.
\end{itemize}
\vspace{-0.1in}
\noindent

\vspace{-0.1in}
\section{Related Work}
\noindent
\textbf{Question and Question-Answer Pair Generation} Early works on Question Generation (QG) mostly resort to rule-based approaches \cite{earlyqg1,earlyqg2,earlyqg3}. However, recently, encoder-decoder based neural architectures~\cite{learningtoask,nqg} have gained popularity as they outperform rule-based methods. Some of them use paragraph-level information~\cite{harvesting, addinfo1, addinfo2,maxoutpointer, separation, answerfocused} as additional information. Reinforcement learning is a popular approach to train the neural QG models, where the reward is defined as the evaluation metrics~\cite{metricasreward1,metricasreward2}, or the QA accuracy/likelihood~\cite{qaaccu1,qaaccu2,semantic}. State-of-the-art QG models~\cite{googlegod, unilm, recurrentbertqg} use pre-trained language models. Question-Answer Pair Generation (QAG) from contexts, which is our main target, is a relatively less explored topic tackled by only a few recent works~\cite{harvesting, googlegod, unilm}. To the best of our knowledge, we are the first to propose a probabilistic generative model for end-to-end QAG; \citet{teachingmachinetoask} use VAE for QG, but they do not tackle QAG. Moreover, we effectively resolve the QA-pair consistency issue by maximizing their mutual information with an InfoMax regularizer~\cite{mine, deepinfomax, qainfomax}, which is another contribution of our work.

\noindent
\textbf{Semi-supervised QA with QG}
With the help of QG models, it is possible to train the QA models in a semi-supervised learning manner to obtain improved performance. \citet{qaqgdualtask} apply dual learning to jointly train QA and QG on unlabeled dataset. \citet{gdan} and \citet{collaborative} train QG and QA in a GAN framework~\citep{gan}. \citet{self-training} propose a curriculum learning to supervise the QG model to gradually generate difficult questions for the QA model. \citet{simple} introduce a cloze-style QAG method to pretrain a QA model. \citet{semantic} propose to filter out low-quality synthetic questions by the answer likelihood. While we focus on the answerable setting in this paper, few recent works tackle the unanswerable settings. \citet{unanswerable} use neural networks to edit answerable questions into unanswerable ones, and perform semi-supervised QA. \citet{googlegod} and \citet{unilm} convert generated questions into unanswerable ones using heuristics, and filter or replace corresponding answers based on EM or F1.

\noindent
\textbf{Variational Autoencoders} Variational autoencoders (VAEs)~\cite{vae} are probabilistic generative models used in a variety of natural language understanding tasks, including language modeling \cite{bowman}, dialogue generation \cite{hvae, cvae-dialogue, hvae2, dialogue-multiple-latent, dialogue-multiple}, and machine translation \cite{mt-vae, mt-rnn-vae, var-attention}. In this work, we propose a novel hierarchical conditional VAE framework with an InfoMax regularization for generating a pair of samples with high consistency.

\section{Method}

Our goal is to generate diverse and consistent QA pairs to tackle the data scarcity challenge in the extractive QA task. Formally, given a \emph{context} $\mathbf{c}$ which contains $M$ tokens, $\mathbf{c}=(c_1, \ldots, c_{M})$, we want to generate QA pairs $(\mathbf{x,y})$ where $\mathbf{x} = (x_1, \ldots, x_{N} )$ is the question containing $N$ tokens and $\mathbf{y} = (y_1, \ldots, y_{L})$ is its corresponding answer containing $L$ tokens. We aim to tackle the QAG task by learning the conditional joint distribution of the question and answer given the context, $p(\mathbf{x,y}|\mathbf{c})$, from which we can sample the QA pairs:
\begin{align*}
    \mathbf{(x, y)} \sim p(\mathbf{x, y}| \mathbf{c})
\end{align*}
We estimate $p(\mathbf{x,y}|\mathbf{c})$ with a probabilistic deep generative model, which we describe next.

\begin{figure*}
	\begin{center}
		\includegraphics[width=1.0\linewidth]{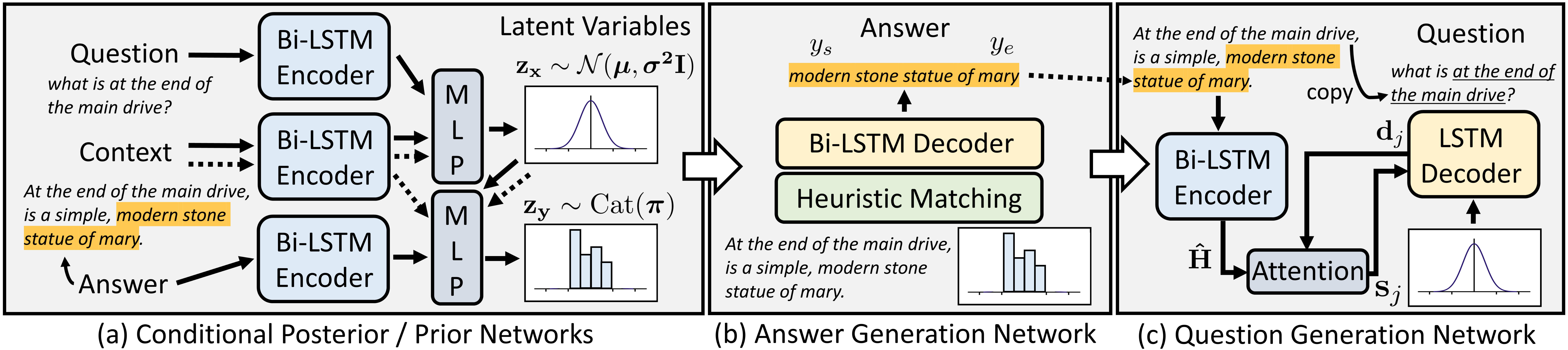}
	\end{center}
	\captionsetup{font=small}
	\caption{The conceptual illustration of the proposed HCVAE model encoding and decoding question and its corresponding answer jointly. The dashed line refers to the generative process of HCVAE.}\label{fig1}
	\vspace{-0.1in}
\end{figure*}
\subsection{Hierarchical Conditional VAE}
We propose to approximate the unknown conditional joint distribution $p(\mathbf{x, y}| \mathbf{c})$, with a variational autoencoder (VAE) framework~\cite{vae}. However, instead of directly learning a common latent space for both question and answer, we model $p(\mathbf{x}, \mathbf{y}|\mathbf{c})$ in a hierarchical conditional VAE framework with a separate latent space for question and answer as follows:
\begin{align*}
\begin{split}
&p_\theta({\bf x}, {\bf y}|{\bf c}) \\
&= \int_{{\bf z_x}} \sum_{{\bf z_y}} p_\theta({\bf x}|{\bf z_x}, {\bf y}, {\bf c}) p_\theta({\bf y} |\rvzx, {\bf z_y}, {\bf c}) \cdot\\
&\quad\quad\quad\quad\quad p_\psi({\bf z_y} | {\bf z_x, c}) p_\psi({\bf z_x} | {\bf c}) d{\bf z_x}
\end{split}
\end{align*}
where $\rvzx$ and $\rvzy$ are latent variables for question and answer respectively, and the $p_{\psi}({\bf z_x}|{\bf c})$ and $p_{\psi}({\bf z_y}|{\bf z_x, c})$ are their conditional priors following an isotropic Gaussian distribution and a categorical distribution (Figure \ref{fig1}-(a)). We decompose the latent space of question and answer, since the answer is always a finite span of context $\mathbf{c}$, which can be modeled well by a categorical distribution, while a continuous latent space is a more appropriate choice for question since there could be unlimited valid questions from a single context. Moreover, we design the bi-directional dependency flow of joint distribution for QA. By leveraging hierarchical structure, we enforce the answer latent variables to be dependent on the question latent variables in $p_\psi({\bf z_y} | {\bf z_x, c})$ and achieve the reverse dependency by sampling question $\mathbf{x} \sim p_\theta({\bf x}|{\bf z_x}, {\bf y}, {\bf c})$. We then use a variational posterior $q_{\phi}(\cdot)$ to maximize the Evidence Lower Bound (ELBO) as follows (The complete derivation is provided in \textbf{Appendix} \ref{derivation}):
\begin{align*}
\begin{split}
\log p_\theta(\mathbf{x}, \mathbf{y} | \mathbf{c}) &\geq \E_{\rvzx \sim q_{\phi}({\bf z_x}|{\bf x, c})} [\log{ p_{\theta}({\bf x}|{\bf z_x}, {\bf y}, {\bf c})}] \\
&\hspace{-0.2cm}+ \E_{{\bf z_y} \sim q_{\phi}({\bf z_y}|{\bf z_x, y, c})} [\log{ p_{\theta}({\bf y}|{\bf z_y}, {\bf c})}] \\
&\hspace{-0.2cm}- \kl [q_{\phi}({\bf z_y}|{\bf z_x, y, c})||p_{\psi}({\bf z_y}|{\bf z_x}, {\bf c})]  \\ 
&\hspace{-0.2cm}- \text{D}_{\text{KL}}[q_{\phi}({\bf z_x}|{\bf x, c})||p_{\psi}({\bf z_x}|{\bf c})] \\
&\hspace{-0.2cm} \equalscolon \loss_{\text{HCVAE}}
\end{split}
\end{align*}
where $\theta$, $\phi$, and $\psi$ are the parameters of the generation, posterior, and prior network, respectively. We refer to this model as a \emph{Hierarchical Conditional Variational Autoencoder (HCVAE)} framework.

\begin{figure}[t]
	\begin{center}
		\includegraphics[width=0.7\linewidth]{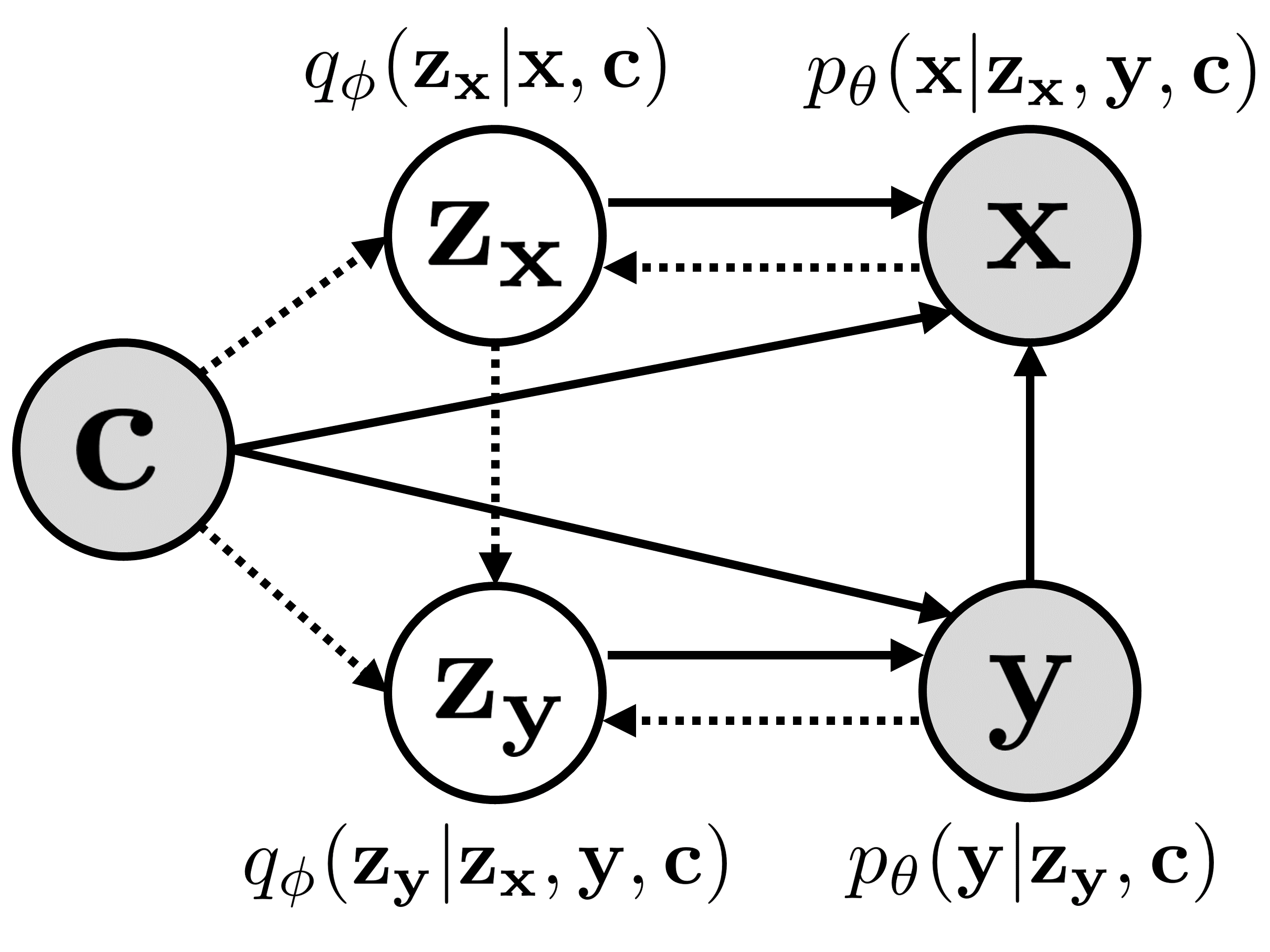}
	\end{center}
	\captionsetup{font=small}
	\caption{The directed graphical model for HCVAE. The gray and white nodes denote observed and latent variables.}\label{graphic}
\end{figure}

Figure~\ref{graphic} shows the directed graphical model of our HCVAE. The generative process is as follows:
\begin{enumerate}[itemsep=1mm, parsep=0pt, leftmargin=*]
	\item {Sample question L.V.: $\rvzx \sim p_{\psi}(\rvzx|\rvc)$}
	\item  {Sample answer L.V.: $\rvzy \sim p_{\psi}(\rvzy|\rvzx, \rvc)$}
	\item {Generate an answer: $\rvy \sim p_{\theta}(\rvy|\rvzy, \rvc)$}
	\item {Generate a question: $\rvx \sim p_{\theta}(\rvx|\rvzx, \rvy, \rvc)$}
\end{enumerate}

\noindent
\textbf{Embedding} We use the pre-trained word embedding network from BERT \cite{bert} for posterior and prior networks, whereas the whole BERT is used as a contextualized word embedding model for the generative networks. For the answer encoding, we use a binary token type id of BERT. Specifically, we encode all context tokens as 0s, except for the tokens which are part of answer span (highlighted words of context in Figure \ref{fig1}-(a) or -(c)), which we encode as 1s. We then feed the sequence of the word token ids, token type ids, and position ids into the embedding layer to encode the answer-aware context. We fix all the embedding layers in HCVAE during training.

\noindent
\textbf{Prior Networks} We use two different conditional prior networks $p_{\psi}(\mathbf{z_x}|\mathbf{c}), p_{\psi}(\mathbf{z_y}|\mathbf{z_x}, \mathbf{c})$ to model context-dependent priors (the dashed lines in Figure ~\ref{fig1}-(a)). To obtain the parameters of isotropic Gaussian $\mathcal{N}(\bm{\mu}, \bm{\sigma}^2\mathbf{I})$ for $p_{\psi}(\mathbf{z_x}|\mathbf{c})$, we use a bi-directional LSTM (Bi-LSTM) to encode the word embeddings of the context into the hidden representations, and then feed them into a Multi-Layer Perceptron (MLP). We model $p_{\psi}(\mathbf{z_y}|\mathbf{z_x},\mathbf{c})$ following a categorical distribution $\text{Cat}({\bm{\pi}})$, by computing the parameter $\bm{\pi}$ from $\mathbf{z_x}$ and the hidden representation of the context using another MLP.

\noindent
\textbf{Posterior Networks} We use two conditional posterior networks $q_{\phi}(\mathbf{z_x}|\mathbf{x}, \mathbf{c}), q_{\phi}(\mathbf{z_y}|\mathbf{z_x}, \mathbf{y},\mathbf{c})$ to approximate true posterior distributions of latent variables for both question $\mathbf{x}$ and answer $\mathbf{y}$. We use two Bi-LSTM encoders to output the hidden representations of question and context given their word embeddings. Then, we feed the two hidden representations into MLP to obtain the parameters of Gaussian distribution, $\bm{\mu'}$ and $\bm{\sigma'}$ (upper right corner in Figure \ref{fig1}-(a)). We use the reparameterization trick~\citep{vae} to train the model with backpropagation since the stochastic sampling process $\rvzx \sim q_{\phi}(\mathbf{z_x}|\mathbf{x},\mathbf{c})$ is nondifferentiable. We use another Bi-LSTM to encode the word embedding of answer-aware context into the hidden representation. Then, we feed the hidden representation and $\rvzx$ into MLP to compute the parameters $\bm{\pi}'$ of categorical distribution (lower right corner in Figure~\ref{fig1}-(a)). We use the categorical reparameterization trick with gumbel-softmax \citep{gumbelsoftmax2, gumbelsoftmax1} to enable backpropagation through sampled discrete latent variables. 

\noindent
\textbf{Answer Generation Networks} Since we consider extractive QA, we can factorize $p_{\theta}(\mathbf{y|z_y, c})$ into $p_{\theta}(y_s| \mathbf{z_y, c})$ and $p_{\theta}(y_e| \mathbf{z_y, c})$, where $y_s$ and $y_e$ are the start and the end position of an answer span (highlighted words in Figure \ref{fig1}-(b)), respectively. To obtain MLE estimators for both, we first encode the context $\rvc$ into the contextualized word embedding of $\mathbf{E^c}=\{ \mathbf{e}^{\bf c}_1, \ldots, \mathbf{e}^{\bf c}_M \}$ with the pre-trained BERT. We compute the final hidden representation of context and the latent variable $\rvzy$ with a heuristic matching layer \citep{heuristicmatching} and a Bi-LSTM:
\begin{gather*}
\mathbf{f}_i = [\mathbf{e}^{\bf c}_i;\: \rvzy ;\: |\mathbf{e}^{\bf c}_i - \rvzy|;\: \mathbf{e}^{\bf c}_i \odot \rvzy]\\
\overrightarrow{\mathbf{h}}_{i} = \overrightarrow{\text{LSTM}}([\mathbf{f}_i, \overrightarrow{\mathbf{h}}_{i-1}]) \\
\overleftarrow{\mathbf{h}}_{i} = \overleftarrow{\text{LSTM}}([\mathbf{f}_i, \overleftarrow{\mathbf{h}}_{i+1}]) \\
\mathbf{H} = [\:\overrightarrow{\mathbf{h}}_{i};\: \overleftarrow{\mathbf{h}}_{i}\:]^{M}_{i=1}
\end{gather*}
where $\rvzy$ is linearly transformed, and $\mathbf{H} \in \mathbb{R}^{d_{\rvy} \times M}$ is the final hidden representation. Then, we feed $\mathbf{H}$ into two separate linear layers to predict $y_s$ and $y_e$.

\noindent
\textbf{Question Generation Networks} 
We design the encoder-decoder architecture for our QG network by mainly adopting from our baselines \cite{maxoutpointer, semantic}. For encoding, we use pre-trained BERT to encode the answer-specific context into the contextualized word embedding, and then use a two-layer Bi-LSTM to encode it into the hidden representation (in Figure~\ref{fig1}-(c)). We apply a gated self-attention mechanism \cite{gated} to the hidden representation to better capture long-term dependencies within the context, to obtain a new hidden representation $\mathbf{\hat{H}} \in \mathbb{R}^{d_{\rvx} \times M}$. 

The decoder is a two-layered LSTM which receives the latent variable $\rvzx$ as an initial state. It uses an attention mechanism~\cite{luong-attention} to dynamically aggregate $\mathbf{\hat{H}}$ at each decoding step into a context vector of $\mathbf{s}_j$, using the $j$-th decoder hidden representation $\mathbf{d}_j \in \mathbb{R}^{d_{\rvx}}$ (in Figure \ref{fig1}-(c)). Then, we feed $\mathbf{d}_j$ and $\mathbf{s}_j$ into MLP with maxout activation \cite{maxout} to compute the final hidden representation $\mathbf{\hat{d}}_j$ as follows:
\begin{gather*}
\mathbf{d}_{0} = \mathbf{z_x}, \:\: \mathbf{d}_{j} = \text{LSTM}([\mathbf{e}^{\bf x}_{j-1}, \mathbf{d}_{j-1}]) \\
\mathbf{r}_j =  \mathbf{\hat{H}}^T\mathbf{W^a}\mathbf{d}_{j} , \:\: \mathbf{a}_j = \text{softmax}(\mathbf{r}_j), \:\: \mathbf{s}_j = \mathbf{\hat{H}} \mathbf{a}_j \\
\mathbf{\hat{d}}_{j} = \text{MLP}([\:\mathbf{d}_{j};\:\mathbf{s}_j\:])
\end{gather*}
where $\rvzx$ is linearly transformed, and $\mathbf{e}^{\bf x}_{j}$ is the $j$-th question word embedding. The probability vector over the vocabulary is computed as $p(\mathbf{x}_j|\rvx_{<j}, \rvzx, \rvy, \rvc) = \text{softmax}(\mathbf{W^e} \mathbf{\hat{d}}_{j})$. We initialize the weight matrix $\mathbf{W^e}$ as the pretrained word embedding matrix and fix it during training. Further, we use the copy mechanism \cite{maxoutpointer}, so that the model can directly copy tokens from the context. We also greedily decode questions to ensure that all stochasticity comes from the sampling of the latent variables.


\subsection{Consistent QA Pair Generation with Mutual Information Maximization}
One of the most important challenges of the QAG task is enforcing consistency between the generated question and its corresponding answer. They should be semantically consistent, such that it is possible to predict the answer given the question and the context. However, neural QG or QAG models often generate questions irrelevant to the context and the answer~\cite{semantic} due to the lack of the mechanism enforcing this consistency. We tackle this issue by maximizing the mutual information (MI) of a generated QA pair, assuming that an answerable QA pair will have high MI. Since an exact computation of MI is intractable, we use a neural approximation. While there exist many different approximations~\cite{mine,deepinfomax}, we use the estimation proposed by \citet{qainfomax} based on Jensen-Shannon Divergence:
\begin{align*}
\begin{split}
    \text{MI}(X; Y) &\geq \mathbb{E}_{\rvx, \rvy \sim \mathbb{P}}[\log g(\rvx, \rvy)] \\
    &  +\frac{1}{2} \mathbb{E}_{\tilde{\rvx}, \rvy \sim \mathbb{N}}[\log(1 - g(\tilde{\rvx}, \rvy))] \\ 
    & + \frac{1}{2} \mathbb{E}_{\rvx, \tilde{\rvy} \sim \mathbb{N}}[\log(1 - g(\rvx, \tilde{\rvy}))]\\
    & \equalscolon \mathcal{L}_{\text{Info}}
\end{split}
\end{align*}
where $\mathbb{E_P}$ and $\mathbb{E_N}$ denote expectation over positive and negative examples. We generate negative examples by shuffling the QA pairs in the minibatch, such that a question is randomly associated with an answer. Intuitively, the function $g(\cdot)$ acts like a binary classifier that discriminates whether QA pair is from joint distribution or not. We empirically find that the following $g(\cdot)$ effectively achieves our goal of consistent QAG:
\begin{gather*}
    g(\rvx, \rvy) = \text{sigmoid}(\mathbf{\overline{x}}^{T}\mathbf{W}\mathbf{\overline{y}})
\end{gather*}
where $\mathbf{\overline{x}}=\frac{1}{N}\sum_i \mathbf{\hat{d}}_{i}$ and $\mathbf{\overline{y}}= \frac{1}{L}\sum_j \mathbf{\hat{h}}_{j}$ are summarized representations of question and answer, respectively. Combined with the ELBO, the final objective of our Info-HCVAE is as follows:
\begin{align*}
\max_{\Theta} \: \loss_{\text{HCVAE}}\: +\: \lambda \mathcal{L}_{\text{Info}}
\vspace{-0.1in}
\end{align*}
where $\Theta$ includes all the parameters of $\phi, \psi, \theta$ and $\bf W$, and $\lambda$ controls the effect of MI maximization. In all experiments, we always set the $\lambda$ as 1.

\section{Experiment}

\subsection{Dataset}

\noindent
\textbf{Stanford Question Answering Dataset v1.1 (SQuAD)}~\citep{squad1}. This is a reading comprehension dataset consisting of questions obtained from crowdsourcing on a set of Wikipedia articles, where the answer to every question is a segment of text or a span from the corresponding reading passage. We use the same split used in~\citet{semantic} for the fair comparison.

\noindent
\textbf{Natural Questions (NQ)}~\citep{naturalquestion}. This dataset contains realistic questions from actual user queries to a search engine, using Wikipedia articles as context. We adapt the dataset provided from MRQA shared task~\cite{mrqa} and convert it into the extractive QA format. We split the original validation set in half, to use as validation and test for our experiments.

\noindent
\textbf{TriviaQA}~\citep{triviaqa}. This is a reading comprehension dataset containing question-answer-evidence triples. The QA pairs and the evidence (contexts) documents are authored and uploaded by Trivia enthusiasts. Again, we only choose QA pairs of which answers are span of contexts.

\noindent
\textbf{HarvestingQA}~\footnote{\url{https://github.com/xinyadu/harvestingQA}} This dataset contains top-ranking 10K Wikipedia articles and 1M synthetic QA pairs generated from them, by the answer span extraction and QG system proposed in~\citep{harvesting}. We use this dataset for semi-supervised learning.

\subsection{Experimental Setups}
\noindent {\bf Implementation Details}
In all experiments, we use BERT-base ($d = 768$) \citep{bert} as the QA model, setting most of the hyperparameters as described in the original paper. For both HCVAE and Info-HCVAE, we set the hidden dimensionality of the Bi-LSTM to $300$ for posterior, prior, and answer generation networks, and use the dimensionality of $450$ and $900$ for the encoder and the decoder of the question generation network. We set the dimensionality of $\rvzx$ as $50$, and define $\rvzy$ to be set of 10-way categorical variables $\rvzy=\{\mathbf{z}_1, \ldots, \mathbf{z}_{20}\}$. For training the QA model, we fine-tune the model for 2 epochs. We train both the QA model and Info-HCVAE with Adam optimizer \citep{adam} with the batch size of $32$ and the initial learning rate of $5\cdot10^{-5}$ and $10^{-3}$ respectively. For semi-supervised learning, we first pre-train BERT on the synthetic data for 2 epochs and fine-tune it on the GT dataset for 2 epochs. To prevent \textit{posterior collapse}, we multiply $0.1$ to the KL divergence terms of question and answer~\cite{betavae}. For more details of the datasets and experimental setup, please see \textbf{Appendix} \ref{training-detail}.

\noindent
\textbf{Baselines} We experiment two variants of our model against several baselines:
\begin{enumerate}[itemsep=0mm, parsep=0pt, leftmargin=*]
    \item \textbf{Harvest-QG}: An attention-based neural QG model with a neural answer extraction system \cite{harvesting}.
    \item \textbf{Maxout-QG}: A neural QG model based on maxout copy mechanism with a gated self-attetion \cite{maxoutpointer}, which uses BERT as the word embedding as suggested by~\citet{semantic}.
    \item \textbf{Semantic-QG}:  A neural QG model based on Maxout-QG with semantic-enhanced reinforcement learning~\cite{semantic}.
    \item \textbf{HCVAE}:  Our HCVAE model without the InfoMax regularizer.
    \item \textbf{Info-HCVAE}: Our full model with the InfoMax regularizer.
\end{enumerate}

\noindent
For the baselines, we use the same answer spans extracted by the answer extraction system \cite{harvesting}.

\subsection{Quantitative Analysis}

\noindent
\textbf{QAE and R-QAE} One of crucial challenges with generative models is a lack of a good quantitative evaluation metric. We adopt \textbf{QA}-based \textbf{E}valuation (QAE) metric proposed by \citet{semantic} to measure the quality of QA pair. QAE is obtained by first training the QA model on the synthetic data, and then evaluating the QA model with human annotated test data. However, QAE only measures how well the distribution of synthetic QA pairs matches the distribution of GT QA pairs, and does not consider the diversity of QA pairs. Thus, we propose {\bf R}everse {\bf QA}-based {\bf E}valuation ({\bf R-QAE}), which is the accuracy of the QA model trained on the human-annotated QA pairs, evaluated on the generated QA pairs. If the synthetic data covers larger distribution than the human annotated training data, R-QAE will be lower. However, note that having a low R-QAE is only meaningful when the QAE is high enough since trivially invalid questions may also yield low R-QAE.

\begin{table}
	\small
	\centering
	\begin{tabular}{lcc}
		\midrule[0.8pt]
		{\textbf{Method}} & {\textbf{QAE ($\mathbf{\uparrow}$)}} & {\textbf{R-QAE ($\mathbf{\downarrow}$)}} \\
		\midrule[0.8pt]
		\multicolumn{3}{c}{\bf SQuAD (EM/F1)} \\ 
		\midrule[0.8pt]
		{Harvesting-QG} & {55.11/66.40} & {64.77/78.85}\\
		{Maxout-QG} & {56.08/67.50} & {62.49/78.24} \\
		{Semantic-QG} & {60.49/71.81} & {74.23/88.54}\\
		\midrule[0.8pt]
		{\bf HCVAE} & {69.46/80.79} & {{\bf 37.57}/61.24}\\
		{\bf Info-HCVAE} & {\bf 71.18/81.51} & {38.80/{\bf 60.73}}\\
		\midrule[0.8pt]
		\multicolumn{3}{c}{\bf Natural Questions (EM/F1)} \\ 
		\midrule[0.8pt]
		{Harvesting-QG} & {27.91/41.23} & {49.89/70.01} \\
		{Maxout-QG} & {30.98/44.96} & {49.96/70.03} \\
		{Semantic-QG} & {30.59/45.29} & {58.42/79.23} \\
		\midrule[0.8pt]
		{\bf HCVAE} & {31.45/46.77} & {32.78/55.12} \\
		{\bf Info-HCVAE} & {\bf 37.18/51.46} & {\bf 29.39/53.04} \\
		\midrule[0.8pt]
		\multicolumn{3}{c}{\bf TriviaQA (EM/F1)} \\ 
		\midrule[0.8pt]
		{Harvesting-QG} & {21.32/30.21} & {29.75/47.73} \\
		{Maxout-QG} & {24.58/34.32} & {31.56/49.92} \\
		{Semantic-QG} & {27.54/38.25} & {37.45/58.15} \\
		\midrule[0.8pt]
		{\bf HCVAE} & {30.20/40.88} & {34.41/48.16} \\
		{\bf Info-HCVAE} & {\bf 35.45/44.11} & {\bf 21.65/37.65} \\
		\midrule[0.8pt]
	\end{tabular}
	\captionsetup{font=small}
	\caption{QAE and R-QAE results on three datasets. All results are the performances on our test set.}
	\label{qaeandrqaeonsquad}
\end{table}

\begin{table}
	\small
	\centering
	\begin{tabular}{ccc|cc}
		\midrule[0.8pt]
		{\bf Harvest} & {\bf Maxout} & {\bf Semantic} & \multirow{2}{*}{\bf HCVAE} & {\bf Info-}\\
		{\bf -QG} & {\bf -QG} & {\bf -QG} & {} & {\bf HCVAE}\\
		\midrule[0.8pt]
		{111.74} & {114.58} & {112.94} & {113.89} & {\bf 117.41} \\
		\midrule[0.8pt]
	\end{tabular}
	\captionsetup{font=small}
	\caption{The results of mutual information estimation. The results are based on QA pairs generated from H$\times$10\%.}
	\label{miresults}
\end{table}

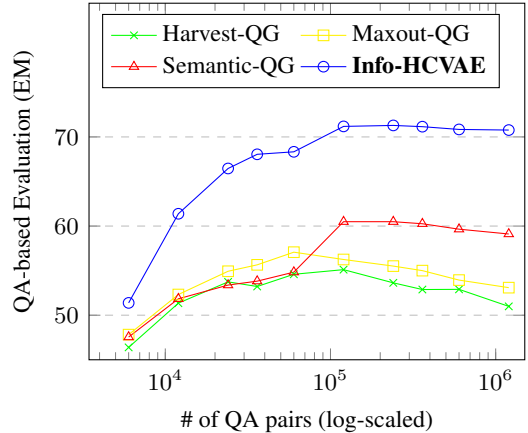
\begin{figure}[t]
\vspace{0.05in}
	\begin{center}
		\begin{tikzpicture}
		
		\begin{axis}[
		compat=1.10,
		xmode=log,
		xlabel={\# of QA pairs (log-scaled)},
		ylabel={QA-based Evaluation (EM)},
		legend cell align={left},
		legend columns=2
		xmin=1.0*1000, xmax=1.5*1000000,
		ymin=45, ymax=85,
		xtick={1000,10000,100000,1000000},
		ytick={50,60,70},
		legend pos=north west,
		ymajorgrids=true,
		grid style=dashed,
		legend style={font=\small},
		label style={font=\small},
		tick label style={font=\small}
		]
		\addplot[
		color=green,
		mark=x,
		]
		coordinates {
			(6*10^3,46.39)(1.2*10^4,51.35)(2.4*10^4,53.74)(3.6*10^4,53.22)(6*10^4,54.59)(1.2*10^5,55.11)(2.4*10^5,53.63)(3.6*10^5,52.88)(6*10^5,52.9)(1.2*10^6,50.99)
		};
		\addplot[
		color=yellow,
		mark=square,
		]
		coordinates {
			(6*10^3,47.82)(1.2*10^4,52.32)(2.4*10^4,54.93)(3.6*10^4,55.65)(6*10^4,57.07)(1.2*10^5,56.25)(2.4*10^5,55.51)(3.6*10^5,55.00)(6*10^5,53.94)(1.2*10^6,53.09)
		};
		\addplot[
		color=red,
		mark=triangle,
		]
		coordinates {
			(6*10^3,47.55)(1.2*10^4,51.84)(2.4*10^4,53.38)(3.6*10^4,53.82)(6*10^4,54.83)(1.2*10^5,60.49)(2.4*10^5,60.49)(3.6*10^5,60.26)(6*10^5,59.65)(1.2*10^6,59.11)
		};
		\addplot[
		color=blue,
		mark=o,
		]
		coordinates {
			(6*10^3,51.38)(1.2*10^4,61.38)(2.4*10^4,66.46)(3.6*10^4,68.05)(6*10^4,68.33)(1.2*10^5,71.18)(2.4*10^5,71.29)(3.6*10^5,71.15)(6*10^5,70.84)(1.2*10^6,70.77)
		};
		\legend{Harvest-QG, Maxout-QG, Semantic-QG, {\bf Info-HCVAE}}
		
		\end{axis}
		\end{tikzpicture}
	\end{center}
 	\caption{\small QAE vs. \# of QA pairs (log-scaled) on SQuAD.}
	\label{qaevsdata}
\end{figure}

\begin{table}[t]
	\small
	\centering
	\begin{tabular}{lcc}
		\midrule[0.8pt]
		{\textbf{Method}} & {\textbf{QAE ($\mathbf{\uparrow}$)}} & {\textbf{R-QAE ($\mathbf{\downarrow}$)}} \\
		\midrule[0.8pt]
		{Baseline} & {56.08/67.50} & {62.49/78.24} \\
		{+Q-latent} & {58.66/70.54} & {40.00/62.02} \\
		{+A-latent} & {69.46/80.79} & {\textbf{37.57}/61.24}\\
		{+InfoMax} & $\textbf{71.18/81.51}$ & 38.80/$\textbf{60.73}$\\
		\midrule[0.8pt]        
	\end{tabular}
	\captionsetup{font=small}
	\caption{QAE and R-QAE results of the ablation study on SQuAD dataset. All the results are the performances on our test set.}
	\label{ablationstudy}
\end{table}

\noindent
\textbf{Results} We compare HCVAE and Info-HCVAE with the baseline models on SQuAD, NQ, and TriviaQA. We use 10\% of Wikipedia paragraphs from HarvestingQA \cite{harvesting} for evaluation. Table~\ref{qaeandrqaeonsquad} shows that both HCVAE and Info-HCVAE significantly outperforms all baselines by large margin in QAE on all three datasets, while obtaining significantly lower R-QAE, which shows that our model generated both high-quality and diverse QA pairs from the given context. Moreover, Info-HCVAE largely outperforms HCVAE, which demonstrates the effectiveness of our InfoMax regularizer for enforcing QA-pair consistency. 

Figure~\ref{qaevsdata} shows the accuracy as a function of number of QA pairs. Our Info-HCVAE outperform all baselines by large margins using orders of magnitude smaller number of QA pairs. For example, Info-HCVAE achieves $61.38$ points using 12K QA pairs, outperforming Semantic-QG that use 10 times larger number of QA pairs.  We also report the score of $\mathbf{\overline{x}}^{T}\mathbf{W}\mathbf{\overline{y}}$ as an approximate estimation of mutual information (MI) between QA pairs generated by each method in Table~\ref{miresults}; our Info-HCVAE yields the largest value of MI estimation.

\noindent 
\textbf{Ablation Study} We further perform an ablation study to see the effect of each model component. We start with the model without any latent variables, which is essentially a deterministic Seq2Seq model (denoted as Baseline in Table~\ref{ablationstudy}). Then, we add in the question latent variable (+Q-latent) and then the answer latent variable (+A-latent), to see the effect of probabilistic latent variable modeling and hierarchical modeling respectively. The results in Table~\ref{ablationstudy} shows that both are essential for improving both the quality (QAE) and diversity (R-QAE) of the generated QA pairs. Finally, adding in the InfoMax regularization (+InfoMax) further improves the performance by enhancing the consistency of the generated QA pairs. 

\subsection{Qualitative Analysis}
\begin{table}[t]
	\small
	\centering
	\begin{tabular}{lccc}
		\midrule[0.8pt]
		{\textbf{Method}} & {\textbf{Diversity}} & {\textbf{Consistency}} & {\textbf{Overall}} \\
		\midrule[0.8pt]
		{Baseline} & {26\%} & {34\%} & {30\%}  \\ 
		{Ours} & {\bf 47\%} & {\bf 50\%} & {\bf 52\%}\\
		{Tie} & {27\%} & {16\%} & {18\%} \\
		\midrule[0.8pt]
	\end{tabular}
	\captionsetup{font=small}
	\caption{The results of human judgement in terms of diversity, consistency, and overall quality on the generated QA pairs.}
	\label{humanevaluation}
\end{table}

\begin{table}[t]
	\small
	\centering
	\begin{tabular}{l}
		\midrule[0.8pt]
		\textbf{Paragraph} 
		\hlpink{The scotland act 1998} which was passed by \\
		and given royal assent by queen Elizabeth ii on 19 \\
		november 1998, governs functions and role of the scottish \\ 
		parliament and delimits its legislative competence \ldots \\
		\midrule[0.8pt]
		\textbf{GT}\: what act sets forth the functions of the scottish \\
		\:\:\:\:\:\:\:\:\:parliament? \\
		\midrule[0.8pt]
		\textbf{O-1}\: which act was passed in 1998?\\
		\textbf{O-2}\: which act governs role of the scottish parliament?\\
		\textbf{O-3}\: which act was passed by queen Elizabeth ii? \\
		\textbf{O-4}\: which act gave the scottish parliament the \\ 
		\:\:\:\:\:\:\:\:\:responsibility to determine its legislative policy? \\
		\midrule[0.8pt]
	\end{tabular}
	\captionsetup{font=small}
	\caption{\small Examples of \textit{one-to-many} mapping of our Info-HCVAE. The answer is highlighted by pink. \textbf{GT} denotes the ground-truth question. \textbf{O-} denotes questions generated by Info-HCVAE.}
	\label{onetomany}
\end{table}

\def\limitarrow#1{%
	\begin{tikzpicture}
	\draw[<-] (0,1.75) to (0,0.5);
	\node at (0,0) {#1};
	\draw[<-] (0,-1.75) to (0,-0.5);
	\end{tikzpicture}}

\begin{table}[t]
	\small
	\centering
	\begin{tabular}{cl}
		\midrule[0.8pt]
		\multicolumn{2}{l}{\textbf{Paragraph} ... Atop the main building' s gold dome is} \\
		\multicolumn{2}{l}{a golden statue of the virgin mary. ... Next to the main}\\
		\multicolumn{2}{l}{building is \hlpink{the basilica of the sacred heart}. Immediately}\\
		\multicolumn{2}{l}{behind the basilica is the \hlyellow{grotto},  ... \hlcyan{a marian place of}} \\
		\multicolumn{2}{l}{\hlcyan{prayer and reflection}. ... At the end of the main drive ...,} \\
		\multicolumn{2}{l}{ is a simple, modern \hlgreen{stone statue of mary}.} \\
		\midrule[0.8pt]
		\multirow{2}{*}{\bf Ori1} & {{\bf Q} what is the grotto at notre dame?} \\ 
		&{{\bf A} a marian place of prayer and reflection} \\
		\midrule[0.8pt]
		\multirow{11}{*}{\limitarrow{\bf Gen}} & \textit{{\bf Q} where is the grotto at?} \\ 
		& \textit{{\bf A} \hlcyan{a marian place of prayer and reflection}} \\
		\cmidrule[0.8pt]{2-2}
		& \textit{{\bf Q} what place is behind the basilica of prayer?} \\
		& \textit{{\bf A} \hlyellow{grotto}}\\
		\cmidrule[0.8pt]{2-2}
		& \textit{{\bf Q} what is next to the main building at} \\
		& \textit{\:\:\:\:\:notre dame?} \\
		& \textit{{\bf A} \hlpink{the basilica of the sacred heart}} \\
		\cmidrule[0.8pt]{2-2}
		& \textit{{\bf Q} what is at the end of the main drive?}\\
		& \textit{{\bf A} \hlgreen{stone statue of mary}} \\
		\midrule[0.8pt]
		\multirow{3}{*}{\bf Ori2} & {{\bf Q} what sits on top of the main building at} \\
		& {\:\:\:\:\:notre dame?} \\
		& {{\bf A} a golden statue of the virgin mary}\\
		\midrule[0.8pt]
	\end{tabular}
	\captionsetup{font=small}
	\caption{QA pairs generated by interpolating between two latent codes encoded by our posterior networks. \textbf{Ori1} and \textbf{Ori2} are from training set of SQuAD.}
	\label{interporlation}
\end{table}

\noindent
\textbf{Human Evaluation} As a qualitative analysis, we first conduct a pairwise human evaluation of the QA pairs generated by our Info-HCVAE and Maxout-QG on 100 randomly selected paragraphs. Specifically, 20 human judges performed blind quality assessment of two sets of QA pairs that are presented in a random order, each of which contained two to five QA pairs. Each set of QA pairs is evaluated in terms of the overall quality, diversity, and consistency between the generated QA pairs and the context. The results in Table \ref{humanevaluation} show that the QA pairs generated by our Info-HCVAE is evaluated to be more diverse and consistent, compared to ones generated by the baseline models.

\noindent 
\textbf{One-to-Many QG} To show that our Info-HCVAE can effectively tackle \textit{one-to-many} mapping problem for question generation, we qualitatively analyze the generated questions for given a context and an answer from the SQuAD validation set. Specifically, we sample the question latent variables multiple times using the question prior network $p_{\psi}(\rvzx|\rvc)$, and then feed them to question generation networks $p_{\theta}(\rvx|\rvzx,\rvy,\rvc)$ with the answer. The example in Table \ref{onetomany} shows that our Info-HCVAE generates diverse and semantically consistent questions given an answer. We provide more qualitative examples in \textbf{Appendix} \ref{qualitativeexamples}. 

\noindent
\textbf{Latent Space Interpolation} To examine if Info-HCVAE learns meaningful latent space of QA pairs, we qualitatively analyze the QA pairs generated by interpolating between two latent codes of it on SQuAD training set. We first encode $\rvzx$ from two QA pairs using posterior networks of $q_{\phi}(\mathbf{z_x}|\mathbf{x,c})$, and then sample $\rvzy$ from interpolated values of $\rvzx$ using prior networks $p_{\psi}(\mathbf{z_y}|\mathbf{z_x, c})$ to generate corresponding QA pairs. Table \ref{interporlation} shows that the semantic of the QA pairs generated smoothly transit from one latent to another with high diversity and consistency. We provide more qualitative examples in \textbf{Appendix} \ref{qualitativeexamples}.

\subsection{Semi-supervised QA}
We now validate our model in a semi-supervised setting, where the model uses both the ground truth labels and the generated labels to solve the QA task, to see whether the generated QA pairs help improve the performance of a QA model in a conventional setting. Since such synthetic datasets consisting of generated QA pairs may inevitably contain some noise~\cite{semantic, unilm, googlegod}, we further refine the QA pairs by using the heuristic suggested by \citet{unilm}, to replace the generated answers whose F1 score to the prediction of the QA model trained on the human annotated data is lower than a set threshold. We select the threshold of $40.0$ for the QA pair refinement model via cross-validation on the SQuAD dataset, and used it for the experiments. Please see \textbf{Appendix} \ref{training-detail} for more details.

\noindent
\textbf{SQuAD} We first perform semi-supervised QA experiments on SQuAD using the synthetic QA pairs generated by our model. For the contexts, we use both the paragraphs in the original SQuAD (S) dataset, and the new paragraphs in the HarvestingQA dataset (H). Using Info-HCVAE, we generate 10 different QA pairs by sampling from the latent spaces (denoted as S$\times$10). For the baseline, we use Semantic-QG \cite{semantic} with the beam search size of $10$ to obtain the same number of QA pairs. We also generate new QA pairs using different portions of paragraphs provided in HarvestingQA (denoted as H$\times$10\%-H$\times$100\%), by sampling one latent variable per context. Table \ref{semionsquad} shows that our framework improves the accuracy of the BERT-base model by 2.12 (EM) and 1.59 (F1) points, significantly outperforming Semantic-QG.

\begin{table}[t]
	\small
	\centering
	\begin{tabular}{lll}
		\midrule[0.8pt]
		{\bf Data} & {\bf EM} & {\bf F1} \\
		\midrule[0.8pt]
		{SQuAD} & {80.25} & {88.23} \\
		\midrule[0.8pt]
		\multicolumn{3}{c}{{\bf Semantic-QG (baseline)}} \\
		\midrule[0.8pt]
		{+S$\times$10} & {81.20 (+0.95)} & {88.36 (+0.13)} \\
		{+H$\times$100\%} & {81.03 (+0.78)} & {88.79 (+0.56)} \\
		{+S$\times$10 + H$\times$100\%} & {81.44 (+1.19)} & {88.72 (+0.49)} \\
		\midrule[0.8pt]
		\multicolumn{3}{c}{\bf Info-HCVAE (ours)} \\
		\midrule[0.8pt]
		{+S$\times$10} & {82.09 (+1.84)} & {89.11 (+0.88)} \\
		{+H$\times$10\%} & {81.37 (+1.12)} & {88.85 (+0.62)} \\
		{+H$\times$20\%} & {81.68 (+1.43)} & {89.06 (+0.93)} \\
		{+H$\times$30\%} & {81.76 (+1.51)} & {89.12 (+0.89)} \\
		{+H$\times$50\%} & {82.17 (+1.92)} & {89.38 (+1.15)} \\
		{+H$\times$100\%} & {\bf 82.37 (+2.12)} & {89.63 (+1.40)} \\
		{+S$\times$10 + H$\times$100\%} & {82.19 (+1.94)} & {\bf 89.84 (+1.59)} \\
		\midrule[0.8pt]
	\end{tabular}
	\captionsetup{font=small}
	\caption{\small The results of semi-supervised QA experiments on SQuAD. All the results are the performances on our test set.}%
	\label{semionsquad}
\end{table}

\noindent
\textbf{NQ and TriviaQA} Our model is most useful when we do not have any labeled data for a target dataset. To show how well our QAG model performs in such a setting, we train the QA model using only the QA pairs generated by our model trained on SQuAD and test it on the target datasets (NQ and TriviaQA). We generate multiple QA pairs from each context of the target dataset, sampling from the latent space one to ten times (denoted by N$\times$1-10 or T$\times$1-10 in Table \ref{semionnqandtrivia}). Then, we fine-tune the QA model pretrained on the SQuAD dataset with the generated QA pairs from the two datasets. Table~\ref{semionnqandtrivia} shows that as we augment training data with larger number of synthetic QA pairs, the performance of the QA model significantly increases, significantly outperforming the QA model trained on SQuAD only. Yet, models trained with our QAG still largely underperform models trained with human labels, due to the distributional discrepancy between the source and the target dataset.

\section{Conclusion} We proposed a novel probabilistic generative framework for generating diverse and consistent question-answer (QA) pairs from given texts. Specifically, our model learns the joint distribution of question and answer given context with a hierarchically conditional variational autoencoder, while enforcing consistency between generated QA pairs by maximizing their mutual information with a novel InfoMax regularizer. To our knowledge, ours is the first successful probabilistic QAG model. We evaluated the QAG performance of our model by the accuracy of the BERT-base QA model trained using the generated questions on multiple datasets, on which it largely outperformed the state-of-the-art QAG baseline (+6.59-10.69 in EM), even with a smaller number of QA pairs. We further validated our model for semi-supervised QA, where it improved the performance of the BERT-base QA model on the SQuAD by 2.12 in EM, significantly outperforming the state-of-the-art model. As future work, we plan to extend our QAG model to a meta-learning framework, for generalization over diverse datasets.

\begin{table}[t]
	\small
	\centering
	\begin{tabular}{lll}
		\midrule[0.8pt]
		{\bf Data} & {\bf EM} & {\bf F1} \\
		\midrule[0.8pt]
		\multicolumn{3}{c}{{\bf Natural Questions}} \\
		\midrule[0.8pt]
		SQuAD & {42.77} & {57.29} \\
		\midrule[0.8pt]
		{+N$\times$1} & {46.70 (+3.94)} & {61.08 (+3.79)} \\
		{+N$\times$2} & {46.95 (+4.19)} & {61.34 (+4.05)} \\
		{+N$\times$3} & {47.73 (+4.96)} & {61.98 (+4.69)} \\
		{+N$\times$5} & {48.19 (+5.42)} & {62.21 (+4.92)} \\
		{+N$\times$10} & {\bf 48.44 (+5.67)} & {\bf 62.69 (+5.40)} \\
		\midrule[0.8pt]
		NQ & {61.65} & {73.91} \\
		\midrule[0.8pt]
		\multicolumn{3}{c}{{\bf TriviaQA}} \\
		\midrule[0.8pt]
		SQuAD & {48.96} & {57.98} \\
		\midrule[0.8pt]
		{+T$\times$1} & {49.65 (+0.69)} & {59.13 (+1.21)} \\
		{+T$\times$2} & {50.01 (+1.05)} & {59.08 (+1.10)} \\
		{+T$\times$3} & {49.71 (+0.75)} & {\bf 59.49 (+1.51)} \\
		{+T$\times$5} & {\bf 50.14 (+1.18)} & {59.21 (+1.23)} \\
		{+T$\times$10} & {49.65 (+0.69)} & {59.20 (+1.22)} \\
		\midrule[0.8pt]
		Trivia & {64.55} & {70.42} \\
		\midrule[0.8pt]
	\end{tabular}
	\captionsetup{font=small}
	\caption{The result of semi-supervised QA experiments on Natural Questions and TriviaQA dataset. All results are the performance on our test set.} %
	\label{semionnqandtrivia}
\end{table}

\vspace{-0.04in}
\section*{Acknowledgements}
\vspace{-0.1in}
This work was supported by the Engineering Research Center Program through the National Research Foundation of Korea (NRF) funded by the Korean Government MSIT (NRF-2018R1A5A1059921), Institute of Information \& communications Technology Planning \& Evaluation (IITP) grant funded by the Korea government(MSIT) (No.2019-0-01410, Research Development
of Question Generation for Deep Learning
based Semantic Search Domain Extension, 
No.2016-0-00563, Research
on Adaptive Machine Learning Technology
Development for Intelligent Autonomous Digital
Companion, No.2019- 0-00075, and Artificial Intelligence
Graduate School Program (KAIST)).
\bibliography{main}
\bibliographystyle{acl_natbib}

\clearpage
\appendix
\section*{Appendix}

\renewcommand{\thesubsection}{\Alph{subsection}}

\subsection{Derivation of Variational Lower Bound}
\label{derivation}
\begin{Thm*}
If we assume conditional independence of $\by$ and $\bz_\bx$, i.e., $p_{\theta}(\by | \bz_\bx, \bz_\by, \bc) = p_{\theta}(\by | \bz_\by, \bc)$, $\log p_\theta(\mathbf{x}, \mathbf{y} | \mathbf{c}) \geq \loss_{\text{HCVAE}}$
\end{Thm*}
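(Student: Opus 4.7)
The plan is to derive the bound by the standard variational recipe: introduce a hierarchical variational posterior, apply Jensen's inequality, and split the resulting log-ratio into four summands. First, I would rewrite the marginal log-likelihood as
\[
\log p_\theta(\bx,\by\mid\bc) = \log \int \sum_{\bz_\by} p_\theta(\bx,\by,\bz_\bx,\bz_\by\mid\bc)\, d\bz_\bx,
\]
multiply and divide the integrand by the joint variational posterior $q_\phi(\bz_\bx,\bz_\by\mid\bx,\by,\bc)$, and apply Jensen's inequality to obtain
\[
\log p_\theta(\bx,\by\mid\bc) \geq \E_{q_\phi(\bz_\bx,\bz_\by\mid\bx,\by,\bc)}\!\left[\log\frac{p_\theta(\bx,\by,\bz_\bx,\bz_\by\mid\bc)}{q_\phi(\bz_\bx,\bz_\by\mid\bx,\by,\bc)}\right].
\]

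Next, I would factorize both distributions according to the graphical model of Figure~\ref{graphic}. The generator factors as $p_\theta(\bx,\by,\bz_\bx,\bz_\by\mid\bc) = p_\psi(\bz_\bx\mid\bc)\,p_\psi(\bz_\by\mid\bz_\bx,\bc)\,p_\theta(\by\mid\bz_\bx,\bz_\by,\bc)\,p_\theta(\bx\mid\bz_\bx,\by,\bc)$, and the variational posterior factors as $q_\phi(\bz_\bx,\bz_\by\mid\bx,\by,\bc) = q_\phi(\bz_\bx\mid\bx,\bc)\,q_\phi(\bz_\by\mid\bz_\bx,\by,\bc)$. At this point I would invoke the conditional independence hypothesis to replace $p_\theta(\by\mid\bz_\bx,\bz_\by,\bc)$ by $p_\theta(\by\mid\bz_\by,\bc)$, so that the answer decoder no longer depends on $\bz_\bx$.

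Then, splitting the log of the ratio into four summands and cancelling latent variables that do not appear in each integrand, I would collect: \textit{(i)} the reconstruction term $\E_{q_\phi(\bz_\bx\mid\bx,\bc)}\log p_\theta(\bx\mid\bz_\bx,\by,\bc)$, for which $\bz_\by$ integrates out; \textit{(ii)} the $\by$-reconstruction term, which reduces (after the independence substitution) to an expectation under $q_\phi(\bz_\by\mid\bz_\bx,\by,\bc)$, with the outer $\bz_\bx$-expectation suppressed in the stated form by the usual variational shorthand; \textit{(iii)} the negative KL term $-\kl[q_\phi(\bz_\bx\mid\bx,\bc)\,\|\,p_\psi(\bz_\bx\mid\bc)]$ obtained from the $\bz_\bx$-ratio after marginalising $\bz_\by$; and \textit{(iv)} the negative KL term $-\kl[q_\phi(\bz_\by\mid\bz_\bx,\by,\bc)\,\|\,p_\psi(\bz_\by\mid\bz_\bx,\bc)]$ from the $\bz_\by$-ratio. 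Summing these four pieces yields exactly $\loss_{\text{HCVAE}}$.

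The derivation is essentially bookkeeping, and the only place where one must be careful is the use of the independence hypothesis: without it, the $\by$-reconstruction term would read $\E_{q_\phi(\bz_\bx,\bz_\by\mid\cdots)}\log p_\theta(\by\mid\bz_\bx,\bz_\by,\bc)$, which does not collapse to the clean single-latent form shown in $\loss_{\text{HCVAE}}$. A secondary subtlety is the suppressed outer $\bz_\bx$-expectation in the $\bz_\by$-reconstruction and $\bz_\by$-KL terms, which is standard variational notation and does not affect the validity of the bound.
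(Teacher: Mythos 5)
Your derivation is correct and follows essentially the same route as the paper's: introduce the factored variational posterior $q_\phi(\bz_\bx\mid\bx,\bc)\,q_\phi(\bz_\by\mid\bz_\bx,\by,\bc)$, apply Jensen's inequality, use the conditional-independence hypothesis to replace $p_{\theta}(\by\mid\bz_\bx,\bz_\by,\bc)$ by $p_{\theta}(\by\mid\bz_\by,\bc)$, and regroup into two reconstruction and two KL terms. The only cosmetic difference is that you apply Jensen once to the joint posterior whereas the paper applies it twice in nested fashion (outer over $\bz_\bx$, then inner over $\bz_\by$), which yields the identical bound; you also correctly flag the suppressed outer $\E_{q_\phi(\bz_\bx\mid\bx,\bc)}$ on the $\by$-reconstruction and $\bz_\by$-KL terms, a point the paper itself acknowledges with an ``$\approx$'' in its final step.
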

\begin{proof}
\begin{align*}
    &\log p_{\theta}(\bx, \by| \bc)\\
    &= \log \int_{\bz_{\bx}} \sum_{\bz_{\by}} p_{\theta}(\bx| \bz_{\bx}, \by, \bc) \cdot \\
    &\quad p_{\theta}(\by|\bz_\bx, \bz_\by, \bc)p_{\psi}(\bz_\by|\bz_\bx, \bc)p_{\psi}(\bz_\bx|\bc) d_{\bz_\bx} \\
    &= \log \int_{\bz_\bx} p_{\theta}(\bx| \bz_\bx, \by, \bc) p_{\psi}(\bz_\bx| \bc) \frac{q_\phi(\bz_\bx|\bx, \bc)}{q_\phi(\bz_\bx|\bx, \bc)} \cdot \\
    &\quad\sum_{\bz_\by} p_{\theta}(\by|\bz_\by, \bc) p_{\psi}(\bz_\by|\bz_\bx, \bc) \frac{q_{\phi}(\bz_\by| \bz_\bx, \by, \bc)}{q_{\phi}(\bz_\by| \bz_\bx, \by, \bc)}  d_{\bz_\bx} \\
    &= \log \int_{\bz_\bx} p_{\theta}(\bx | \bz_\bx, \by, \bc) p_{\psi}(\bz_\bx | \bc) \frac{q_\phi(\bz_\bx|\bx, \bc)}{q_\phi(\bz_\bx|\bx, \bc)} \\
    &\quad\cdot \mathbb{E}_{q_{\phi}(\bz_\by | \bz_\bx, \by, \bc)}\Bigg[\frac{p_{\theta}(\by|\bz_\by, \bc)p_{\psi}(\bz_\by|\bz_\bx, \bc)}{q_{\phi}(\bz_\by| \bz_\bx, \by, \bc)} \Bigg] d_{\bz_\bx}\\
    &= \log \mathbb{E}_{q_{\phi}(\bz|\bx, \bc)}\{\frac{p_{\theta}(\bx| \bz_\bx, \by, \bc)p_{\psi}(\bz_\bx|\bc)}{q_{\phi}(\bz_\bx| \bx, \bc)} \cdot \\
    &\quad \mathbb{E}_{q_{\phi}(\bz_\by |\bz_\bx, \by, \bc)}\Bigg[\frac{p_{\theta}(\by|\bz_\by, \bc)p_{\psi}(\bz_\by|\bz_\bx, \bc)}{q_{\phi}(\bz_\by| \bz_\bx, \by, \bc)} \Bigg]\} \\
    &\geq \mathbb{E}_{q_{\phi}(\bz|\bx, \bc)}\{\log \frac{p_{\theta}(\bx| \bz_\bx, \by, \bc)p_{\psi}(\bz_\bx|\bc)}{q_{\phi}(\bz_\bx| \bx, \bc)} + \\
    &\quad\log\mathbb{E}_{q_{\phi}(\bz_\by |\bz_\bx, \by , \bc)}\Bigg[\frac{p_{\theta}(\by|\bz_\by, \bc)p_{\psi}(\bz_\by|\bz_\bx, \bc)}{q_{\phi}(\bz_\by|\bz_\bx, \by, \bc)} \Bigg]\} \\
    &= \mathbb{E}_{q_{\phi}(\bz|\bx, \bc)}[\log p_{\theta}(\bx |  \bz_\bx, \by, \bc)] \\
    &\quad - \text{D}_{\text{KL}}[q_\phi (\bz_\bx | \bx, \bc) || p_{\psi}(\bz_\bx | \bc)] + \mathbb{E}_{q_{\phi}(\bz|\bx, \bc)}\{\\
    &\quad\log\mathbb{E}_{q_{\phi}(\bz_\by |\bz_\bx,\by, \bc)}\Bigg[\frac{p_{\theta}(\by|\bz_\by, \bc)p_{\psi}(\bz_\by|\bz_\bx, \bc)}{q_{\phi}(\bz_\by|\bz_\bx, \by, \bc)} \Bigg]\} \\
    &\geq \mathbb{E}_{q_{\phi}(\bz_\bx |\bx, \bc)}[\log p_{\theta}(\bx |  \bz_\bx, \by, \bc)] \\
    &\quad- \text{D}_{\text{KL}}[q_\phi (\bz_\bx | \bx, \bc) || p_{\psi}(\bz_\bx | \bc)]  \\
    &\quad+ \mathbb{E}_{q_{\phi}(\bz_\bx |\bx, \bc)} \{ \mathbb{E}_{q_{\phi}(\bz_\by |\bz_\bx, \by, \bc)}[\log p_\theta (\by | \bz_\by, \bc)] \\
    &\quad- \text{D}_{\text{KL}}[q_{\phi}(\bz_\by | \bz_\bx, \by,\bc)|| p_\psi (\bz_\by | \bz_\bx, \bc)] \} \\
    &\approx \mathbb{E}_{q_{\phi}(\bz_\bx |\bx, \bc)}[\log p_{\theta}(\bx |  \bz_\bx, \by, \bc)] \\
    &\quad- \text{D}_{\text{KL}}[q_\phi (\bz_\bx | \bx, \bc) || p_{\psi}(\bz_\bx | \bc)] \\
    &\quad+ \mathbb{E}_{q_{\phi}(\bz_\by |\bz_\bx, \by, \bc)}[\log p_\theta (\by | \bz_\by, \bc)] \\
    &\quad- \text{D}_{\text{KL}}[q_{\phi}(\bz_\by | \bz_\bx, \by,\bc)|| p_\psi (\bz_\by | \bz_\bx, \bc)]
\end{align*}
\end{proof}

\begin{table*}
	\small
	\centering
	\begin{tabular}{llll}
		\midrule[0.8pt]
		\textbf{Datasets} & \textbf{Train (\#)} & \textbf{Valid (\#)} & \textbf{Source}  \\
		\midrule[0.8pt]
		{SQuAD} & {86,588} & {10,507} & {Crowd-sourced questions from Wikipedia paragraph}  \\
		\midrule[0.8pt]
		{Natural Questions} & {104,071} & {12,836} & {Questions from actual userfor searching Wikipedia paragraph}  \\
		\midrule[0.8pt]
		{TriviaQA} & {74,160} & {7,785} & {Question and answer pairs authored by trivia enthusaists from the Web}  \\
		\midrule[0.8pt]
		{HarvestQA} & {1,259,691} & {-} & {Generated by neural networks from top-ranking 10,000 Wikipedia articles}  \\
		\midrule[0.8pt]
	\end{tabular}
	\caption{The statistics and the data source of SQuAD, Natural Questions, TriviaQA, and HarvestingQA.}
	\label{datastats}
\end{table*}

\begin{table}[htb]
\centering
\begin{tabular}{lll}
\midrule[0.8pt]
{\bf Replace} & {\bf EM} & {\bf F1} \\
\midrule[0.8pt]
{F1 $\leq 0.0$} & {82.4} & {89.39} \\
{F1 $\leq 20.0$} & {83.11} & {89.65} \\
{F1 $\leq 40.0$} & {\bf 83.32} & {\bf 89.79} \\
{F1 $\leq 60.0$} & {83.20} & {89.78} \\
{F1 $\leq 80.0$} & {83.09} & {89.75} \\
\midrule[0.8pt]
\end{tabular}
\caption{The effect of F1-based replacement strategy in semi-supervised setting of SQuAD+H$\times$100\%. All results are the performance on validation set of \citet{semantic}.}
\label{cross-val}
\end{table}

\subsection{Datatset}

The statistics and the data resource are summarized in Table~\ref{datastats}.

\noindent
\textbf{SQuAD}
We tokenize questions and contexts with WordPiece tokenizer from BERT. To fairly compare our proposed methods with the existing semi-supervised QA, we follow \citet{semantic}'s split, which divides original development set from SQuAD v1.1 \cite{squad1} into new validation set and test set. We adopt most of the codes from \citet{huggingface} for preprocessing data, training, and evaluating the BERT-base QA model.

\noindent
\textbf{Natural Questions}
Other than the original Natural Questions \cite{naturalquestion} dataset, we use subset of the dataset provided by MRQA shared task \cite{mrqa} for extractive QA. As semi-supervised setting with SQuAD, we split the validation set provided from MRQA into half for validation set and the others for test set. All the tokens from question and context are tokenized with WordPiece tokenizer from BERT. We generate QA pairs from context not containing html tag, and evaluate QA model with the official MRQA evaluation scripts. 

\noindent
\textbf{TriviaQA}
For TriviaQA \cite{triviaqa}, we also use the training set from MRQA shared task, and divide the development set from MRQA into half for validation set and the other for test set. All the tokens from question and context are tokenized with WordPiece tokenizer from BERT. For evaluation, we follow the MRQA's official evaluation procedure.  

\noindent
\textbf{HarvestingQA\footnote{\url{https://github.com/xinyadu/harvestingQA}}}
We use paragraphs from HarvestingQA dastaset \cite{harvesting} to generate QA pairs for QA-based Evaluation (QAE) and Reverse QA-based Evaluation (R-QAE). For the baseline QG models such as Maxout-QG and Semantic-QG, we use the same answer spans from the dataset. For the experiments of Maxout-QG baseline, we train the model and generate new questions from the context and answer, while the questions generated by Semantic-QG are provided by the authors \cite{semantic}.

\subsection{Training Details}
\label{training-detail}
\noindent
\textbf{Maxout-QG}
We use Adam \cite{adam} optimizer with the batch size of 64 and set the initial learning rate of $10^{-3}$. We always set the beam size of 10 for decoding. We also evaluate the Maxout-QG model on our SQuAD validation set with BLEU4 \cite{bleu}, and get 15.68 points.

\noindent
\textbf{Selection of Threshold for Replacement}
As mentioned in our paper, we use the threshold of $40.0$ selected via cross-validation of the QA model performance, using both the full SQuAD and HarvestingQA dataset for QAG. The detailed selection processes are as follows: 1) train QA model on only human annotated data, 2) compute F1 score of generated QA pairs, and 3) if the F1 score is lower than the threshold, replace the generated answer with the prediction of QA model. We investigate the optimal value of threshold among $[20.0, 40.0, 60.0, 80.0]$ using our validation set of SQuAD. Table~\ref{cross-val} shows the results of cross-validation on the validation set. The optimal value of $40.0$ is used for semi-supervised experiments on Natural Questions and TriviaQA. For fully unlabeled semi-supervised experiments on Natural Questions and TriviaQA, the QA model is only trained on SQuAD and used to replace the synthetic QA pairs (denoted in our paper as N$\times$1-10, T$\times$1-10).

\noindent
\textbf{Semi-supervised learning}
For the semi-supervised learning experiment on SQuAD, we follow \citet{semantic}'s split for a fair comparison. Specifically, we receive the unique IDs for QA pairs from the authors and use exactly the same validation and test set as theirs. For the Natural Questions and TriviaQA experiments, we use our own split as mentioned in the above. We generate QA pairs from the paragraphs of Wikipedia extracted by \citet{harvesting} and train BERT-base QA model with the synthetic data for two epochs. Then we further train the model with human-annotated training data for two more epochs. The catastrophic forgetting reported in \citet{semantic} does not occur in our cases. We use Adam optimizer \cite{adam} with batch size 32 and follow the learning rate scheduling as described in \cite{bert} with initial learning rate  $2\cdot10^{-5}$ and $3\cdot10^{-5}$ for synthetic and human annotated data, respectively.

\subsection{Qualitative Examples}
\label{qualitativeexamples}
The qualitative examples in Table  \ref{appendix-qas}, \ref{appendix-one-to-many}, \ref{appendix-interporlation}  are shown in the next page.

\begin{table*}[t]
	\small
	\centering
	\begin{tabular}{l}
		\midrule[0.8pt]
		\midrule[0.8pt]
		\textbf{Paragraph-1} Near Tamins-Reichenau the Anterior Rhine and the Posterior Rhine join and form the Rhine. \ldots This section \\
		is nearly 86km long, and descends from a height of 599m to 396m. It flows through a wide glacial alpine valley known as \\
		the Rhine Valley (German: Rheintal). Near Sargans a natural dam, only a few metres high, \ldots The Alpine Rhine begins \\
		in the most western part of the Swiss canton of Graubünden, \ldots \\
		\midrule[0.8pt]
		{\textbf{Q-1}:\: how long is the rhine?} \\
		{\textbf{A-1}:\: 86km long} \\
		\midrule[0.8pt]
		{\textbf{Q-2}:\: how large is the dam?} \\
		{\textbf{A-2}:\: a few metres high} \\
		\midrule[0.8pt]
		{\textbf{Q-3}:\: where does the anterior rhine and the posterior rhine join the rhine?} \\
		{\textbf{A-3}:\: Tamins-Reichneau} \\
		\midrule[0.8pt]
		{\textbf{Q-4}:\: what type of valley does the rhine flows through?} \\
		{\textbf{A-4}:\: glacial alpine} \\
		\midrule[0.8pt]
		{\textbf{Q-5}:\: what is the rhine valley in german?} \\
		{\textbf{A-5}:\: Rheintal} \\
		\midrule[0.8pt]
		{\textbf{Q-6}:\: where deos the alpine rhine begin?} \\
		{\textbf{A-7}:\: Swiss canton of Graubünden} \\
		\midrule[0.8pt]
		\midrule[0.8pt]
		\textbf{Paragraph-2} Victoria is the centre of dairy farming in Australia. It is home to 60\% of Australia's 3 million dairy cattle \\
		and produces nearly two-thirds of the nation's milk, almost 6.4 billion litres. The state also has 2.4 million beef cattle, with \\
		more than 2.2 million cattle and calves slaughtered each year. In 2003–04, Victorian commercial fishing crews and \\
		aquaculture industry produced 11,634 tonnes of seafood valued at nearly \$109 million. \ldots \\
		\midrule[0.8pt]
		{\textbf{Q-1}:\: what industry produced 11,63 million tonnes of seafood in 2003-04 ?} \\
		{\textbf{A-1}:\: aquaculture} \\
		\midrule[0.8pt]
		{\textbf{Q-2}:\: what type of cattle is consumed in Victoria?} \\
		{\textbf{A-2}:\: beef} \\
		\midrule[0.8pt]
		{\textbf{Q-3}:\: in what year did victorian commercial fishing and aquaculture industry produce a large amount of seafood?} \\
		{\textbf{A-3}:\: 2003–04} \\
		\midrule[0.8pt]
		{\textbf{Q-4}:\: how many cattle and calves each year are slaughtered annually?} \\
		{\textbf{A-4}:\: 2.2 million} \\
		\midrule[0.8pt]
		{\textbf{Q-5}:\: how much of the nation's milk is produced by the dairy?} \\
		{\textbf{A-5}:\: two-thrids} \\
		\midrule[0.8pt]
		\midrule[0.8pt]
		\textbf{Paragraph-3} A teacher's role may vary among cultures. Teachers may provide instruction in literacy and numeracy, \\
		craftsmanship or vocational training, the arts, religion, civics, community roles, or life skills. \\
		\midrule[0.8pt]
		{\textbf{Q-1}:\: what do a teacher's role vary?} \\
		{\textbf{A-1}:\: culture} \\
		\midrule[0.8pt]
		{\textbf{Q-2}:\: what do teachers provide instruction in?} \\
		{\textbf{A-2}:\: vocational training} \\
		\midrule[0.8pt]
		{\textbf{Q-3}:\: what is one thing a teacher may provide instruction for?} \\
		{\textbf{A-3}:\: community roles} \\
		\midrule[0.8pt]
		{\textbf{Q-4}:\: what is one of the skills that teachers provide in?} \\
		{\textbf{A-4}:\: life skills} \\
		\midrule[0.8pt]
		\midrule[0.8pt]
	\end{tabular}
	\caption{Examples of QA pairs generated by our Info-HCVAE. We sample multiple latent variables from $p_{\psi}(\cdot)$, and feed them to generation networks. All the paragraphs are from validation set of SQuAD.}
	\label{appendix-qas}
\end{table*}

\begin{table*}[t]
	\small
	\centering
	\begin{tabular}{l}
		\midrule[0.8pt]
		\midrule[0.8pt]
		\textbf{Paragraph-1} Super bowl 50 was an american football game to determine the champion of the National Football League  \\
		(NFL) for the 2015 season. The American Football Conference (AFC) champion \hlpink{Denver Broncos} defeated the National \\ 
		Football Conference (NFC) champion Carolina Panthers 24 – 10 to earn their third super bowl title. \ldots \\
		\midrule[0.8pt]
		\textbf{GT}\: which NFL team represented the AFC at super bowl 50? \\
		\midrule[0.8pt]
		\textbf{Ours-1}\: what team did the American Football Conference represent?\\
		\textbf{Ours-2}\: who won the 2015 American Football Conference? \\
		\textbf{Ours-3}\: which team defeated the carolina panthers? \\
		\textbf{Ours-4}\: who defeated the panthers in 2015? \\
		\textbf{Ours-5}\: what team defeated the carolina panthers in the 2015 season? \\
		\textbf{Ours-6}\: who was the champion of the American Football League in the 2015 season? \\
		\textbf{Ours-7}\: what team won the 2015 American Football Conference? \\
		\midrule[0.8pt]
		\midrule[0.8pt]
		\textbf{Paragraph-2} \ldots Some clergy offer healing services, while exorcism is an occasional practice by some clergy in the united \\
		methodist church in \hlpink{Africa}. \ldots \\
		\midrule[0.8pt]
		\textbf{GT}\: in what country does some clergy in the umc occasionally practice exorcism?\\
		\midrule[0.8pt]
		\textbf{Ours-1}\: in what country do some clergy in the united methodist church take place?\\
		\textbf{Ours-2}\: in what country is exorcism practice an occasional practice? \\
		\textbf{Ours-3}\: use of exorcism is an occasional practice in what country?\\
		\textbf{Ours-4}\: is exorcism usually an occasional practice in what country? \\
		\midrule[0.8pt]
		\midrule[0.8pt]
		\textbf{Paragraph-3} \ldots, the city was the subject of a song , ``walking into fresno'' , written by hall of fame guitarist \hlpink{Bill Aken} \ldots \\
		\midrule[0.8pt]
		\textbf{GT}\: who wrote ``walking in fresno''? \\
		\midrule[0.8pt]
		\textbf{Ours-1}\: who wrote ``walking into fresno''?\\
		\textbf{Ours-2}\: ``walking into fresno'' was written by whom?\\
		\textbf{Ours-3}\: the song ``walking into fresno'' was written by whom?\\
		\midrule[0.8pt]
		\midrule[0.8pt]
	\end{tabular}
	\caption{Examples of \textit{one-to-many} mapping of our Info-HCVAE. Answers are highlighted by pink. We sample multiple question latent variables from $p_{\psi}(\rvzx|\rvc)$, and feed them to question generation networks with a fixed answer. \textbf{GT} denotes ground-truth question, and \textbf{Seq2Seq} denotes question generated by Maxout-QG. All the paragraphs, ground truth questions, and answers are from validation set of SQuAD.}
	\label{appendix-one-to-many}
\end{table*}

\def\limitarrow#1{%
	\begin{tikzpicture}
	\draw[<-] (0,1.25) to (0,0.5);
	\node at (0,0) {#1};
	\draw[<-] (0,-1.25) to (0,-0.5);
	\end{tikzpicture}}
\begin{table*}[t]
	\small
	\centering
	\begin{tabular}{cl}
	    \midrule[0.8pt]
		\midrule[0.8pt]
		\multicolumn{2}{l}{\textbf{Paragraph-1} Notre Dame is known for its competitive admissions, with the incoming class enrolling in fall 2015 admitting} \\ 
		\multicolumn{2}{l}{\hlyellow{3,577 from a pool of 18,156 (19.7\%)}. The academic profile of the enrolled class continues to rate among \hlpink{the top 10 to 15}} \\ 
		\multicolumn{2}{l}{in the nation for national research universities. \ldots 1,400 of the \hlcyan{3,577} (39.1\% ) were admitted under the early action plan.}\\
		\midrule[0.8pt]
		\multirow{2}{*}{\bf Ori1} & {{\bf Q} where does notre dame rank in terms of academic profile among research universities in the us?} \\ 
		&{{\bf A} the top 10 to 15 in the nation} \\
		\midrule[0.8pt]
		\multirow{8}{*}{\limitarrow{\bf Gen}} & \textit{{\bf Q} where does the academic profile of notre dame rank?} \\ 
		& \textit{{\bf A} \hlpink{the top 10 to 15}} \\
		\cmidrule[0.8pt]{2-2}
		& \textit{{\bf Q} what was the rate of the incoming class enrolling in the fall of 2015?} \\
		& \textit{{\bf A} \hlyellow{3,577 from a pool of 18,156 (19.7\%)}}\\
		\cmidrule[0.8pt]{2-2}
		& \textit{{\bf Q} how many students attended notre dame?} \\
		& \textit{{\bf A} \hlcyan{3,577}} \\
		\cmidrule[0.8pt]{2-2}
		\midrule[0.8pt]
		\multirow{2}{*}{\bf Ori2} & {{\bf Q} what percentage of students at notre dame participated in the early action program?} \\
		& {{\bf A} 39.1\%}\\
		\midrule[0.8pt]
		\midrule[0.8pt]
		\multicolumn{2}{l}{\textbf{Paragraph-2} \ldots begun as a one-page journal in September \hlpink{1876}, the scholastic magazine is issued \hlcyan{twice} monthly and \ldots} \\ 
		\multicolumn{2}{l}{In \hlyellow{1987}, when some students believed that the observer began to show a \ldots In spring 2008 an undergraduate journal for}\\
		\multicolumn{2}{l}{political science research, beyond politics, made its debut.} \\ 
		\midrule[0.8pt]
		\multirow{2}{*}{\bf Ori1} & {{\bf Q} when did the scholastic magazine of notre dame begin publishing?} \\ 
		&{{\bf A} september 1876} \\
		\midrule[0.8pt]
		\multirow{8}{*}{\limitarrow{\bf Gen}} & \textit{{\bf Q} when was the scholastic magazine published?} \\ 
		& \textit{{\bf A} \hlpink{1876}} \\
		\cmidrule[0.8pt]{2-2}
		& \textit{{\bf Q} in what year did notre dame get its liberal newspaper?} \\
		& \textit{{\bf A} \hlyellow{1987}}\\
		\cmidrule[0.8pt]{2-2}
		& \textit{{\bf Q} how often is the scholastic magazine published ?} \\
		& \textit{{\bf A} \hlcyan{twice}} \\
		\cmidrule[0.8pt]{2-2}
		\midrule[0.8pt]
		\multirow{2}{*}{\bf Ori2} & {{\bf Q} in what year did notre dame begin its undergraduate journal ?} \\
		& {{\bf A} 2008}\\
		\midrule[0.8pt]
		\midrule[0.8pt]
		\multicolumn{2}{l}{\textbf{Paragraph-3} As at most other universities, notre dame's students run \hlyellow{a number of news media outlets}. \hlpink{The nine student}}\\
		\multicolumn{2}{l}{\hlpink{- run outlets} include \ldots, and \hlgreen{several} magazines and journals. \ldots. the dome yearbook is published \hlcyan{annually}. \ldots}\\
		\midrule[0.8pt]
		\multirow{2}{*}{\bf Ori1} & {{\bf Q} what is the daily student paper at notre dame called?} \\ 
		&{{\bf A} the observer} \\
		\midrule[0.8pt]
		\multirow{8}{*}{\limitarrow{\bf Gen}} & \textit{{\bf Q} how many student media outlets are there at notre dame?} \\ 
		& \textit{{\bf A} \hlpink{nine student - run outlets include three}} \\
		\cmidrule[0.8pt]{2-2}
		& \textit{{\bf Q} what type of media is the student paper at notre dame?} \\
		& \textit{{\bf A} \hlyellow{a number of news media}}\\
		\cmidrule[0.8pt]{2-2}
		& \textit{{\bf Q} how often is the dome published?} \\
		& \textit{{\bf A} \hlcyan{annually}} \\
		\cmidrule[0.8pt]{2-2}
		& \textit{{\bf Q} how many magazines are published at notre dame ?} \\
		& \textit{{\bf A} \hlgreen{several}} \\
		\midrule[0.8pt]
		\multirow{2}{*}{\bf Ori2} & {{\bf Q} how many student news papers are found at notre dame ?} \\
		& {{\bf A} three}\\
		\midrule[0.8pt]
	\end{tabular}
	\caption{QA pairs generated by interpolating between two latent codes encoded by our posterior networks. \textbf{Ori1} and \textbf{Ori2} are from training set of SQuAD.}
	\label{appendix-interporlation}
\end{table*}


\end{document}